\DeclareMathOperator{\rank}{rank}
\DeclareMathOperator{\diag}{diag}
\newcommand{\EE}[1]{\mathbb{E}({#1})}
\def\interleave{|\kern-.25ex|\kern-.25ex|}
\def\interleavesub{|\kern-.15ex|\kern-.15ex|}
\newcommand{\nnorm}[1]{\interleave {#1} \interleave}
\newcommand{\nnormsub}[1]{\interleavesub {#1} \interleavesub}
\newcommand{\nNorm}[1]{\left|\kern-.25ex\left|\kern-.25ex\left| {#1}\right|\kern-.25ex\right|\kern-.25ex\right|}
\newcommand{\norm}[1]{\left\|{#1}\right\|}
\newcommand{\trnorm}[1]{\norm{{#1}}_{*}}
\newcommand{\spnorm}[1]{\norm{{#1}}_{\mathrm{sp}}}
\newcommand{\trnnorm}[1]{\nnorm{{#1}}_{*}}
\newcommand{\spnnorm}[1]{\nnorm{{#1}}_{\mathrm{sp}}}
\newcommand{\spnnormsub}[1]{\nnormsub{{#1}}_{\mathrm{sp}}}
\newcommand{\Norm}[1]{\left\|{#1}\right\|}
\newcommand{\Trnorm}[1]{\Norm{{#1}}_{*}}
\newcommand{\Spnorm}[1]{\Norm{{#1}}_{\mathrm{sp}}}
\def\myllangle{\langle\mspace{-4mu}\langle\mspace{1mu}}
\def\myrrangle{\mspace{1mu}\rangle\mspace{-4mu}\rangle}
\numberwithin{equation}{section}
\theoremstyle{plain}
\newtheorem{proposition}{Proposition}[section]
\newtheoremstyle{remark}{\topsep}{\topsep}%
     {\normalfont}
     {}           
     {\bfseries}  
     {.}          
     {.5em}       
     {\thmname{#1}\thmnumber{ #2}\thmnote{ #3}}
\theoremstyle{remark}
\def\comma{\unskip,~}
\long\def\comment#1{}
\def\reals{{\mathbb R}}
\def\E{{\mathbb E}}
\def\supp{\mathop{\text{supp}\kern.2ex}}
\let\hat\widehat
\let\tilde\widetilde
\let\hat\widehat
\let\tilde\widetilde
\def\given{{\,|\,}}
\def\1{{(1)}}
\def\2{{(2)}}
\def\M{{\mathcal{M}}}
\def\tr{\mathop{\text{tr}}}
\long\def\comment#1{}
\def\comma{\unskip,~}
\long\def\comment#1{}
\def\reals{{\mathbb R}}
\def\E{{\mathbb E}}
\def\Cov{\mathop{\text{Cov}}}
\def\supp{\mathop{\text{supp}\kern.2ex}}
\let\tilde\widetilde
\let\hat\widehat
\let\tilde\widetilde
\def\given{{\,|\,}}
\def\1{{(1)}}
\def\2{{(2)}}
\def\tr{\mathop{\text{tr}}}
\long\def\comment#1{}
\def\cram{{\sc cram}}
\def\spam{{\small\sc SpAM}}
\def\diag{\mathop{\rm diag}}
\def\ones{\mathbf{1}}
\def\threebars{\mbox{$|\kern-.25ex|\kern-.25ex|$}}
\def\rank{\mathop{\rm rank}}
\def\S{\mathcal{S}}
\def\H{\mathcal{H}}
\def\K{{K}}
\def\rank{\mathop{\rm rank}}
\def\half{{1/2}}
\def\M{\mathbb{M}}
\def\F{\mathbb{F}}
\def\pinv{{-1}}
\def\Res{Z}
\def\Proj{P}
\def\cN{{\mathcal N}}
\def\cT{{\mathcal H}}
\begin{document}

\begin{frontmatter}
\title{Nonparametric Reduced Rank Regression}
\runtitle{Nonparametric Reduced Rank Regression}

\begin{aug}
\vskip10pt
\author{\fnms{Rina} \snm{Foygel${}^{\dag,*}$}\ead[label=e1]{rinafb@stanford.edu}}
\comma 
\author{\fnms{Michael} \snm{Horrell${}^{\dag}$}\ead[label=e2]{horrell@galton.uchicago.edu}}
\comma
\author{\fnms{Mathias} \snm{Drton${}^{\dag,\ddag}$}\ead[label=e3]{md5@uw.edu}}
\and
\author{\fnms{John} \snm{Lafferty${}^{\dag\ss}$}\ead[label=e4]{lafferty@galton.uchicago.edu}}
\address{
\vskip1pt
\begin{tabular}{ccc}
${}^{*}$Department of Statistics & ${}^{\dag}$Department of Statistics & ${}^{\ddag}$Department of Statistics\\
Stanford University & ${}^{\ss}$Department of Computer Science & University of Washington\\
& The University of Chicago & 
\end{tabular}
\\[10pt]
\today\\[5pt]
}
\end{aug}



\begin{abstract}
  We propose an approach to multivariate nonparametric regression that
  generalizes reduced rank regression for linear models.  An additive
  model is estimated for each dimension of a $q$-dimensional response,
  with a shared $p$-dimensional predictor variable.  To control the
  complexity of the model, we employ a functional form of the Ky-Fan
  or nuclear norm, resulting in a set of function estimates that have
  low rank.  Backfitting algorithms are derived and justified using a
  nonparametric form of the nuclear norm subdifferential.  Oracle
  inequalities on excess risk are derived that exhibit the scaling
  behavior of the procedure in the high dimensional setting.  The
  methods are illustrated on gene expression data.
\end{abstract}

\begin{keyword}
\kwd{multivariate regression}
\kwd{nonparametric regression}
\kwd{nuclear norm regularization}
\kwd{high dimensional inference}
\kwd{oracle risk bounds}
\end{keyword}

\vskip20pt
\end{frontmatter}

\maketitle

\begin{abstract}
  We propose an approach to multivariate nonparametric regression that
  generalizes reduced rank regression for linear models.  An additive
  model is estimated for each dimension of a $q$-dimensional response,
  with a shared $p$-dimensional predictor variable.  To control the
  complexity of the model, we employ a functional form of the Ky-Fan
  or nuclear norm, resulting in a set of function estimates that have
  low rank.  Backfitting algorithms are derived and justified using a
  nonparametric form of the nuclear norm subdifferential.  Oracle
  inequalities on excess risk are derived that exhibit the scaling
  behavior of the procedure in the high dimensional setting.  The
  methods are illustrated on gene expression data.
\end{abstract}
\makeatletter
\def\blfootnote{\gdef\@thefnmark{}\@footnotetext}
\blfootnote{This is an extended version of a paper
presented at NIPS \citep{NRRR:NIPS}.}

\vskip20pt
\section{Introduction}

In the multivariate regression problem the objective is to estimate
the conditional mean 
\[
\E(Y\given X) = m(X) = (m^1(X),\ldots,
m^q(X))^\top,
\]
where $Y$ is a $q$-dimensional response vector, $X$ is a
$p$-dimensional covariate vector, and we are given a sample of $n$
i.i.d.~pairs $\{(X^{(i)}, Y^{(i)})\}$ from the joint distribution of
$X$ and $Y$.  This is also referred to as multi-task learning in the
machine learning literature.  Under a linear model, the mean is
estimated as $m(X) = BX$ where $B\in\reals^{q\times p}$ is a $q\times
p$ matrix of regression coefficients.  When the dimensions $p$ and $q$
are large relative to the sample size $n$, the coefficients of $B$
cannot be reliably estimated, without further assumptions.

In reduced rank regression the matrix $B$ is estimated under a rank
constraint $r = \rank(B) \leq C$, so that the rows or columns of $B$
lie in an $r$-dimensional subspace of $\reals^q$ or $\reals^p$.
Intuitively, this implies that the model is based on a smaller number
of features than the ambient dimensionality $p$ would suggest, or that
the tasks representing the components $Y^k$ of the response are
closely related.  In low dimensions, the constrained rank model can be
computed as an orthogonal projection of the least squares solution; but
in high dimensions this is not well defined.

Recent research has studied the use of the nuclear norm as a convex
surrogate for the rank constraint.  The nuclear norm $\|B\|_*$, also
known as the trace or Ky-Fan norm, is the sum of the singular vectors
of $B$.  A rank constraint can be thought of as imposing sparsity, but
in an unknown basis; the nuclear norm plays the role of the $\ell_1$
norm in sparse estimation.  Its use for low rank estimation problems
was proposed by \cite{fazel:02}.  More recently, nuclear norm
regularization in multivariate linear regression has been studied by
\cite{Yuan:07}, and by \cite{Negahban:11}, who analyzed the scaling properties of
the procedure in high dimensions.

In this paper we study nonparametric parallels of reduced rank linear
models. We focus our attention on additive models, so that the
regression function $m(X) = (m^1(X), \ldots, m^q(X))^\top$ has each
component $m^k(X) = \sum_{j=1}^p m^k_j(X_j)$ equal to a sum of $p$
functions, one for each covariate.  The objective is then to estimate
the $q\times p$ matrix of functions $M(X) = \left[m^k_j(X_j)\right]$.


The first problem we address, in Section~\ref{sec:penalties}, is to
determine a replacement for the regularization penalty $\|B\|_*$ in
the linear model.  Because we must estimate a matrix of functions, the
analogue of the nuclear norm is not immediately apparent.  We propose
two related regularization penalties for nonparametric low rank
regression, and show how they specialize to the linear case.  We then
study, in Section~\ref{sec:subdiff}, the (infinite dimensional)
subdifferential of these penalties.  In the population setting, this
leads to stationary conditions for the minimizer of the regularized
mean squared error.  This subdifferential calculus then justifies
penalized backfitting algorithms for carrying out the optimization for
a finite sample.  Constrained rank additive models (\cram) for
multivariate regression are analogous to sparse additive models
(\spam) for the case where the response is 1-dimensional
\citep{Ravikumar:09} (studied also in the reproducing kernel Hilbert space setting by \cite{Raskutti:10}),
 but with the goal of recovering a low-rank matrix rather
than an entry-wise sparse vector.
  The backfitting algorithms we derive in
Section~\ref{sec:stationary} are analogous to the iterative smoothing
and soft thresholding backfitting algorithms for \spam{} proposed by
\cite{Ravikumar:09}.  
A uniform bound on the excess risk of the estimator relative to an
oracle is given Section~\ref{sec:excess}.  This shows the statistical
scaling behavior of the methods for prediction.  The analysis requires
a concentration result for nonparametric covariance matrices in the
spectral norm.  Experiments with synthetic, gene, and biochemistry data are given in
Section~\ref{sec:genes}, which are used to illustrate different
facets of the proposed nonparametric
reduced rank regression techniques.

\section{Nonparametric Nuclear Norm Penalization}
\label{sec:penalties}

We begin by presenting the penalty that we will use to induce
nonparametric regression estimates to be low rank.  To motivate our
choice of penalty and provide some intuition, suppose that
$f^1(x),\ldots, f^q(x)$ are $q$ smooth one-dimensional functions with
a common domain. What does it mean for this collection of functions to
be low rank?  Let $x^{(1)}, x^{(2)}, \ldots, x^{(n)}$ be a collection
of points in the common domain of the functions.  We require that the
$n\times q$ matrix of function values $\F(x^{(1:n)}) = [f^k(x^{(i)})]$ is
low rank.  This matrix is of rank at most $r < q$ for every set
$\{x^{(i)}\}$ of arbitrary size $n$ if and only if the functions $\{f^k\}$
are $r$-linearly independent---each function can be written as a
linear combination of $r$ of the other functions.

In the multivariate regression setting, but still assuming the domain
is one-dimensional for simplicity ($q>1$ and $p=1$), we have a random
sample $X^{(1)}, \ldots, X^{(n)}$.  Consider the $n\times q$ sample
matrix $\M = [m^k(X^{(i)})]$ associated with a vector $M=(m^1,\dots,m^q)$
of $q$ smooth (regression) functions, and suppose that $n > q$.  We
would like for this to be a low rank matrix.  This suggests the
penalty
\[\|\M\|_* =
\sum_{s=1}^q \sigma_s(\M) = \sum_{s=1}^q \sqrt{\lambda_s(\M^\top
  \M)},
\]
 where $\{\lambda_s(A)\}$ denotes the eigenvalues of a
symmetric matrix $A$ and $\{\sigma_s(B)\}$ denotes the singular values
of a matrix $B$. Now, assuming the columns of $\M$ are centered, and
$\E[m^k(X)] = 0$ for each $k$, we recognize $\frac{1}{n} \M^\top \M$
as the sample covariance $\hat\Sigma(M)$ of the population covariance
\[
\Sigma(M) := \Cov(M(X)) = [\E(m^k(X) m^l(X))].
\]
This motivates the
following sample and population penalties, where $A^\half$ denotes the
matrix square root:
\begin{align}
\text{population penalty:} & \quad \|\Sigma(M)^\half\|_* = \|\Cov(M(X))^\half\|_* \\
\text{sample penalty:} &\quad \|\hat \Sigma(M)^\half\|_* = \frac{1}{\sqrt{n}} \|\M\|_*. 
\end{align}
We will also use the notation $\trnnorm{M} = \|\Cov(M(X))^\half\|_*$.

This leads to the following population
and empirical regularized risk functionals for low rank nonparametric regression:
\begin{align}
\text{population penalized risk:} & \quad \frac{1}{2} \E \|Y - M(X)\|_2^2 + \lambda \|\Sigma(M)^\half\|_* \\
\text{empirical penalized risk:} &\quad \frac{1}{2n} \| Y - \M \|_F^2 + \frac{\lambda}{\sqrt{n}} \|\M\|_*,
\end{align}
where, in the empirical case, $Y$ denotes the $n\times q$ matrix of
response values for the sample $\{(X^{(i)}, Y^{(i)})\}$.
We recall that if $A\succeq 0$ has spectral decomposition $A=UDU^{\top}$
then $A^\half = UD^\half  U^{\top}$.

\section{Constrained Rank Additive Models (\cram)}
\label{sec:additive}

We now consider the case where $X$ is $p$-dimensional.  Throughout the
paper we use superscripts to denote indices of the $q$-dimensional
response, and subscripts to denote indices of the $p$-dimensional
covariate.  We consider the family of additive models, with regression
functions of the form 
\[
m(X) = (m^1(X),\ldots, m^q(X))^\top = \sum_{j=1}^p
M_j(X_j),
\]
each term $M_j(X_j) = (m^{1}_j(X_j), \ldots,
m^q_j(X_j))^\top$ being a $q$-vector of functions evaluated at $X_j$.

In this setting we propose two different penalties.  The first
penalty, intuitively, encourages the vector 
$(m^{1}_j(X_j), \ldots, m^q_j(X_j))$ to be low rank, for each
$j$.  Assume that the functions $m^k_j(X_j)$ all have mean zero; this
is required for identifiability in the additive model.  As a
shorthand, let $\Sigma_j = \Sigma(M_j) = \Cov(M_j(X_j))$ denote the
covariance matrix of the $j$-th component functions, with sample
version $\hat\Sigma_j$.  The population and sample versions of the
first penalty are then given by
\begin{align}
\label{eq:penalty1}
& \bigl\|\Sigma_1^\half\bigr\|_* + \bigl\|\Sigma_2^\half\bigr\|_* + \cdots + \bigl\|\Sigma_p^\half\bigr\|_*\\
&\bigl\|\hat\Sigma_1^\half\bigr\|_* + \bigl\|\hat\Sigma_2^\half\bigr\|_* + \cdots + \bigl\|\hat\Sigma_p^\half\bigr\|_*
   = \frac{1}{\sqrt{n}} \sum_{j=1}^p  \|\M_j\|_* .
\end{align}
The second penalty encourages the set of $q$ vector-valued
functions $(m^k_1, m^k_2, \ldots, m^k_p)^\top$ to be low rank.   This
penalty is given by 
\begin{align}
\label{eq:penalty2}
& \left\| \bigl(\Sigma_1^{1/2} \cdots \Sigma_p^{1/2} \bigr) \right\|_* \\
& \left\| \bigl(
    \hat\Sigma_1^{1/2} \cdots \hat\Sigma_p^{1/2} \bigr) \right\|_* =
    \frac{1}{\sqrt{n}} \|\M_{1:p}\|_*
\end{align}
where, for convenience of notation, $\M_{1:p} = \left( \M_1^\top 
\cdots  \M_p^\top\right)^\top$ is an $np \times q$ matrix.   The
corresponding population and empirical risk functionals, for the first penalty, are then 
\begin{align}
&\frac{1}{2} \E\Bigl\|Y -
\sum_{j=1}^p M_j(X)\Bigr\|_2^2 + \lambda \sum_{j=1}^p \bigl\|\Sigma_j^\half\bigr\|_* \\
&\frac{1}{2n} \Bigl\| Y -
\sum_{j=1}^p \M_j \Bigr\|_F^2 + \frac{\lambda}{\sqrt{n}} \sum_{j=1}^p \|\M_j\|_*
\end{align}
and similarly for the second penalty. 

Now suppose that each $X_j$ is normalized so that $\E(X_j^2)=1$.   In the
linear case we have $M_j(X_j) = X_j B_j$ where $B_j\in\reals^q$.  Let
$B = (B_1 \cdots B_p) \in \reals^{q\times p}$.
Some straightforward calculation shows that the penalties reduce to
\begin{align}
\|\Sigma_j^{1/2}\|_* &= \|B_j\|_2 \\
\|\Sigma_1^{1/2} \cdots \Sigma_p^{1/2}\|_* &= \|B\|_*.
\end{align}
Thus, in the linear case the first penalty is encouraging $B$ to be
column-wise sparse, so that many of the $B_j$s are zero, meaning that
$X_j$ doesn't appear in the fit.  This is a version of the group lasso \citep{grouplasso}.  The second penalty
reduces to the nuclear norm regularization
$\|B\|_*$ used for high-dimensional reduced-rank regression.

\section{Subdifferentials for Functional Matrix Norms}
\label{sec:subdiff}

A key to deriving algorithms for functional low-rank regression is
computation of the subdifferentials of the penalties.  We are
interested in $(q\times p)$-dimensional matrices of functions
$F=[f^{k}_j]$.  For each column index $j$ and row index $k$, $f^{k}_j$
is a function of a random variable $X_j$, and we will take
expectations with respect to $X_j$ implicitly. We write $F_j$ to mean
the $j$th column of $F$, which is a $q$-vector of functions of $X_j$.
We define the inner product between two matrices of functions as
\begin{equation}
\myllangle F,G\myrrangle :=\sum_{j=1}^p\sum_{k=1}^q \E(f^{k}_jg^{k}_j) = \sum_{j=1}^p \E(F_j^{\top}G_j)=\tr\left(\E(FG^{\top})\right)\;,
\end{equation}
and write $\norm{F}_2=\sqrt{\myllangle F,F\myrrangle}$.
Note that $\norm{F}_2$ equals the Frobenius norm of ${\sqrt{\E(FF^{\top})}}$ where $\E(FF^{\top}) =\sum_j \E(F_j F_j^{\top}) \succeq 0$
is a positive semidefinite $q\times q$ matrix. 

We define two further norms on a matrix of functions $F$, namely,
\begin{align*}
  \spnnorm{F}:=\sqrt{\Spnorm{\EE{FF^{\top}}}}=\Spnorm{\sqrt{\EE{FF^{\top}}}} \quad\text{and}\quad
  \trnnorm{F}:=
\|{\sqrt{\EE{FF^{\top}}}}\|_*,
\end{align*}
where $\spnorm{A}$ is the spectral norm (operator norm), the largest
singular value of $A$, and it is convenient to write the matrix square
root as $\sqrt{A}=A^\half$.  Each of the norms depends on $F$ only
through $\E(FF^{\top})$.  In fact, these two norms are dual---for any
$F$,
\begin{equation}
\trnnorm{F}=\sup_{\spnnormsub{G}\leq 1}\myllangle G,F\myrrangle\;,
\end{equation}
where the supremum is attained by setting
$G=\left(\sqrt{\EE{FF^{\top}}}\right)^\pinv F$, with $A^\pinv$
denoting the matrix pseudo-inverse.


\begin{proposition}
The subdifferential of $\trnnorm{F}$ is the set
\begin{equation}
\label{Subdiff}
\mathcal{S}(F) := \left\{  \Bigl(\sqrt{\EE{FF^{\top}}}\Bigr)^\pinv F + H \
  :\  \spnnorm{H} \leq 1,\ \EE{FH^{\top}}=\mathbf{0}_{q\times q}, \ \EE{FF^{\top}} H = \mathbf{0}_{q\times p}
\; a.e.\right\}\;.
\end{equation}
\end{proposition}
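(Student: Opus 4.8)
The plan is to verify the two inclusions that characterize the subdifferential, working from the identity $\trnnorm{F} = \sup_{\spnnormsub{G}\le 1} \myllangle G, F\myrrangle$ established just above, and from the fact that both norms depend on $F$ only through $\E(FF^\top)$. Write $\Sigma = \E(FF^\top)$, $A = \sqrt{\Sigma}$, and $G_F = A^\pinv F$, which is the function matrix attaining the supremum; note $\E(G_F F^\top) = A^\pinv \Sigma = A^\pinv A^2 = A$ (using $A^\pinv A^2 = A$ for the PSD matrix square root) and $\myllangle G_F, F\myrrangle = \tr(A) = \trnnorm{F}$, while $\spnnorm{G_F} = \Spnorm{A^\pinv A} = 1$ provided $\Sigma \neq 0$. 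So $G_F$ is a subgradient, and the content of the claim is that adding any admissible $H$ keeps it a subgradient, and that nothing else does.

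For the inclusion $\supseteq$: given $H$ with $\spnnorm{H}\le 1$, $\E(FH^\top) = \mathbf 0$, and $\Sigma H = \mathbf 0$ a.e., I must show $W := G_F + H$ satisfies $\trnnorm{F'} \ge \trnnorm{F} + \myllangle W, F' - F\myrrangle$ for all $F'$. By the dual formula it suffices to check $\spnnorm{W}\le 1$ and $\myllangle W, F\myrrangle = \trnnorm{F}$. The second is immediate: $\myllangle H, F\myrrangle = \tr(\E(F H^\top)) = 0$, so $\myllangle W, F\myrrangle = \tr(A) = \trnnorm{F}$. For the first, compute $\E(W W^\top) = \E(G_F G_F^\top) + \E(G_F H^\top) + \E(H G_F^\top) + \E(H H^\top)$. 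Here $\E(G_F G_F^\top) = A^\pinv \Sigma A^\pinv = A^\pinv A^2 A^\pinv$ is the projection onto the column space of $A$; the cross term $\E(G_F H^\top) = A^\pinv \E(F H^\top) = \mathbf 0$; and $\E(H H^\top)$ has spectral norm $\le 1$. The key is that the condition $\Sigma H = \mathbf 0$ forces $\E(H H^\top)$ to be supported on the orthogonal complement of $\colspan(A)$ — indeed $\Sigma H = \mathbf 0$ gives $A H = \mathbf 0$ (again via the square root), hence $A^\pinv A \cdot \E(HH^\top) = \mathbf 0$, so $\E(HH^\top)$ lives in the kernel block. Thus $\E(WW^\top)$ is block diagonal relative to the splitting $\colspan(A) \oplus \colspan(A)^\perp$, with the first block equal to a projection (spectral norm $1$) and the second block equal to $\E(HH^\top)$ restricted there (spectral norm $\le 1$), so $\spnnorm{W} = \sqrt{\Spnorm{\E(WW^\top)}} \le 1$.

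For the inclusion $\subseteq$: let $W$ be any subgradient of $\trnnorm{\cdot}$ at $F$. Standard convex analysis for a norm (here an extended seminorm on the function space with the given inner product) gives that $W$ is a subgradient iff $\spnnorm{W}\le 1$ and $\myllangle W, F\myrrangle = \trnnorm{F}$ — i.e. $W$ lies in the dual unit ball and achieves the norm. Set $H := W - G_F$; I must recover the three stated conditions. First, $\myllangle W, F\myrrangle = \tr(A)$ together with $\myllangle G_F, F\myrrangle = \tr(A)$ gives $\myllangle H, F\myrrangle = 0$, but I need the stronger matrix statement $\E(F H^\top) = \mathbf 0$; this comes from the equality case in the inequality $\tr(A) = \myllangle W, F\myrrangle = \tr(\E(W F^\top)) \le \Spnorm{W}\,\trnnorm{F} \le \tr(A)$ — so the Hölder-type inequality $\tr(\E(WF^\top)) \le \spnnorm{W}\,\|\sqrt{\Sigma}\|_*$ is tight, which forces $\E(W F^\top) = A$ exactly (the maximizer is unique on $\colspan(A)$), hence $\E(HF^\top) = A - A = \mathbf 0$. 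Taking transposes gives $\E(FH^\top)=\mathbf 0$ as well. Then $\spnnorm{W}\le 1$ plus $\E(WF^\top)$ having $\colspan(A)$ as an invariant block forces, by the same block-diagonalization argument run in reverse, that $\E(HH^\top)$ is supported off $\colspan(A)$, equivalently $\Sigma H = \mathbf 0$ a.e.; and $\spnnorm{H}\le 1$ follows because $\E(HH^\top)$ is exactly the complementary block of $\E(WW^\top)$, whose norm is $\le 1$.

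The main obstacle I anticipate is the infinite-dimensional bookkeeping: making the "equality case in Hölder forces exact recovery of the maximizer" step rigorous when $G$ ranges over an $L^2$-type space of function matrices rather than ordinary matrices, and being careful that the pseudo-inverse manipulations $A^\pinv A^2 = A$, $\Sigma H = 0 \Leftrightarrow AH = 0$, and the block-diagonal decomposition of $\E(WW^\top)$ are all valid a.e. in $X_j$ and commute correctly with the expectations. Everything reduces to finite-dimensional ($q\times q$) linear algebra once one conditions appropriately, but the translation between the function-level conditions (the "$a.e.$" qualifier on $\E(FF^\top)H = \mathbf 0$) and the covariance-level conditions is where care is needed.
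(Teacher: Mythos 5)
Your sufficiency half, $\mathcal{S}(F)\subseteq\partial\trnnorm{F}$, is correct and is essentially the paper's own argument: verify the alignment $\myllangle W,F\myrrangle=\trnnorm{F}$ and $\spnnorm{W}\le 1$ by splitting $\E(WW^{\top})$ into the block on $\colspan{\sqrt{\E(FF^{\top})}}$ (a projection) and the block carried by $\E(HH^{\top})$, then invoke the duality with $\spnnorm{\cdot}$. The paper phrases this as a direct verification of the subgradient inequality rather than via the ``dual ball plus alignment'' characterization, but the content is the same; note only that your characterization also uses the reverse duality $\sup_{\trnnorm{G}\le 1}\myllangle W,G\myrrangle=\spnnorm{W}$, which is true (take $G=uu^{\top}W/\spnnorm{W}$ for a top eigenvector $u$ of $\E(WW^{\top})$) but not the duality statement the paper records. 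For the converse inclusion $\partial\trnnorm{F}\subseteq\mathcal{S}(F)$ the paper does not give an argument at all --- it cites the ordinary matrix case (Watson; Recht et al.) --- so there you are attempting more than the paper; unfortunately that is where your proposal has a genuine gap.

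The gap is exactly the step you flag and then do not carry out: converting covariance-level equalities into the a.e.\ function-level condition $\E(FF^{\top})H=\mathbf{0}$. ``The same block-diagonalization argument run in reverse'' cannot do this, because $\E(FH^{\top})=\mathbf{0}$ together with information about the blocks of $\E(WW^{\top})$ constrains only second moments, while $\E(FF^{\top})H=\mathbf{0}$ a.e.\ is a pointwise statement --- indeed, uncorrelatedness of $H$ with $F$ does not imply $\E(FF^{\top})H=\mathbf{0}$ a.e., which is precisely why the proposition lists them as separate conditions. Likewise, your claim that tightness of the H\"older-type bound ``forces $\E(WF^{\top})=A$ exactly'' is true but is not a uniqueness fact you can simply cite; it has to be extracted from an equality-case analysis. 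The missing argument runs as follows: write $\E(FF^{\top})=\sum_i d_i v_iv_i^{\top}$ with $d_i>0$ and let $P$ be the projection onto $\colspan{\E(FF^{\top})}$ (so $F=PF$ a.e.). From $\myllangle W,F\myrrangle=\sum_i\sqrt{d_i}$ and $\Spnorm{\E(WW^{\top})}\le 1$, the Cauchy--Schwarz bound $\E\bigl((v_i^{\top}W)(v_i^{\top}F)\bigr)\le\sqrt{\E(v_i^{\top}WW^{\top}v_i)}\,\sqrt{d_i}\le\sqrt{d_i}$ must be tight for every $i$, which forces the pointwise identities $v_i^{\top}W=v_i^{\top}F/\sqrt{d_i}$ a.e.\ and $\E(v_i^{\top}WW^{\top}v_i)=1$. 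Hence $PW=\bigl(\sqrt{\E(FF^{\top})}\bigr)^{\pinv}F$ a.e., so $H:=W-\bigl(\sqrt{\E(FF^{\top})}\bigr)^{\pinv}F=(I-P)W$ satisfies $\E(FF^{\top})H=\mathbf{0}$ a.e.\ and $\spnnorm{H}\le 1$ (its second moment is the complementary block of $\E(WW^{\top})$), and then $\E(FH^{\top})=\E(FW^{\top})(I-P)=\mathbf{0}$ follows using $F=PF$ a.e.\ together with $Cv_i=v_i$ for $C=\E(WW^{\top})$ (from $C\preceq I$ and $v_i^{\top}Cv_i=1$). With this pointwise Cauchy--Schwarz equality step supplied, your outline of the necessity direction is correct; without it, that half remains a sketch whose central mechanism is missing.
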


\begin{proof}
The fact that $\mathcal{S}(F)$ contains the subdifferential $\partial
\trnnorm{F}$ can be proved by comparing our setting (matrices of
functions) to the ordinary matrix case; see \cite{watson:92,recht:2010}.  Here, we show
the reverse inclusion, $\mathcal{S}(F) \subseteq
\partial \trnnorm{F}$.
Let $D\in\mathcal{S}(F)$ and let $G$ be any element of the function space. We need to show
\begin{equation}
\label{Claim1}
\trnnorm{F+G} \geq \trnnorm{F} +\myllangle G,  D\myrrangle\;,
\end{equation}
where $
D = \Bigl(\sqrt{\EE{FF^{\top}}}\Bigr)^\pinv F + H =: \tilde{F}+H$
for some $H$ satisfying the conditions in \eqref{Subdiff} above.
Expanding the right-hand side of \eqref{Claim1}, we have
\begin{align}
\trnnorm{F}+\myllangle G, D\myrrangle &= \trnnorm{F} + \myllangle G,
\tilde{F}
+ H \myrrangle \\
&= \myllangle F+G,  \tilde{F}+ H \myrrangle \\
&\leq  \trnnorm{F+G} \spnnorm{D}\;,
\end{align}
where the second equality follows from $\trnnorm{F}= \myllangle  F,
\tilde{F}\myrrangle$, 
and the fact that  $\myllangle F, H\myrrangle =\tr\bigl(\EE{FH^{\top}}\bigr)=0$.
The inequality follows from the duality of the norms. 

Finally, we show that $\spnnorm{D}\leq 1$. We have
\begin{align}
\EE{DD^{\top}} &= \EE{\tilde{F}\tilde{F}^{\top}} + \EE{\tilde{F}H^{\top}}+\EE{H\tilde{F}^{\top}}+\EE{HH^{\top}}
\\
& =  \EE{\tilde{F}\tilde{F}^{\top}} +\EE{HH^{\top}} \;,
\end{align}
 where we use the fact that $\EE{FH^{\top}}=\mathbf{0}_{q\times q}$, implying $\EE{\tilde{F}H^{\top}}=\mathbf{0}_{q\times q}$.
 Next, let $\EE{FF^{\top}}=VDV^{\top}$ be a reduced singular value decomposition, where $D$ is a positive diagonal matrix of size $q'\leq q$. Then $\EE{\tilde{F}\tilde{F}^{\top}}=VV^{\top}$, and we have
 \[\EE{FF^{\top}}\cdot H=\mathbf{0}_{q\times p}\text{ a.e.} \ \Leftrightarrow \ V^{\top}H=\mathbf{0}_{q'\times p}\text{ a.e.} \ \Leftrightarrow \ \EE{\tilde{F}\tilde{F}^{\top}}H=\mathbf{0}_{q\times p}\text{ a.e.}\;.\]
This implies that $\EE{\tilde{F}\tilde{F}^{\top}}\cdot\EE{HH^{\top}}=\mathbf{0}_{q\times q}$ and so these two symmetric matrices have orthogonal row spans and orthogonal column spans. Therefore,
\begin{align}
\Spnorm{\EE{DD^{\top}}}
&=\Spnorm{\EE{\tilde{F}\tilde{F}^{\top}}+\EE{HH^{\top}}} \\
&=
\max\left\{\Spnorm{\EE{\tilde{F}\tilde{F}^{\top}}},\Spnorm{\EE{HH^{\top}}}\right\}\\
&\leq 1\;,
\end{align}
where the last bound comes from the fact that $\spnnorm{\tilde{F}},\spnnorm{H}\leq 1$. Therefore $\spnnorm{D}\leq 1$.
\end{proof}

This gives the subdifferential of penalty 2, defined in \eqref{eq:penalty2}.  We can view the first penalty 
update as just a special case of the second penalty update. For
penalty 1 in \eqref{eq:penalty1}, if we are updating $F_j$ and fix all the other functions, we are now penalizing the norm
\begin{equation}
\trnnorm{F_j}=\Trnorm{\sqrt{\EE{F_jF_j^{\top}}}}\;,
\end{equation}
which is clearly just a special case of penalty 2 with
a single $q$-vector of functions instead of $p$ different
$q$-vectors of functions. So, we have
\begin{equation*}
\label{Subdiff_penalty1}\partial\trnnorm{F_j}
= \left\{  \left(\sqrt{\EE{F_j F_j^{\top}}}\right)^\pinv \! F_j + H_j \ :\
  \spnnorm{H_j} \leq 1,\ \EE{F_j H_j^{\top}}=\mathbf{0}, \ \EE{F_jF_j^{\top}} H_j = \mathbf{0}\; a.e.\right\}.
\end{equation*}

\section{Stationary Conditions and Backfitting Algorithms}
\label{sec:stationary}

Returning to the base case of $p=1$ covariate, consider the population
regularized risk optimization
\begin{equation}
\label{eq:risk}
\min_M \Bigl\{ \frac{1}{2} \E\|Y - M(X)\|_2^2 + \lambda \trnnorm{M}\Bigr\},
\end{equation}
where $M$ is a vector of $q$ univariate functions.
The stationary condition for this optimization is
\begin{equation}
\label{eq:stationary}
\E(Y \given X) = M(X) + \lambda V(X) \quad \text{a.e.} \;\;
\text{for some $V\in \partial \trnnorm{M}$}.
\end{equation}
Define $P(X) := \E(Y\given X)$.

\begin{proposition}
Let $\E(PP^\top) = U \diag(\tau) U^\top$ be the singular value decomposition and define
\begin{equation}
M = U \diag([1-\lambda/\sqrt{\tau}]_+) U^\top P
\end{equation}
where $[x]_+ = \max(x,0)$.
Then $M$ satisfies stationary condition \eqref{eq:stationary},
and is a minimizer of \eqref{eq:risk}.
\end{proposition}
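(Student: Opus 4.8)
The plan is to exploit convexity, so that it suffices to verify the stationary condition \eqref{eq:stationary}. Write $J(M)$ for the objective in \eqref{eq:risk}. Conditioning on $X$ gives $\tfrac12\E\|Y-M(X)\|_2^2 = \tfrac12\E\|Y\|_2^2 - \myllangle P,M\myrrangle + \tfrac12\norm{M}_2^2$, so this term is Fréchet differentiable in $M$ with gradient $M-P$; and $\trnnorm{\cdot}$ is convex, being the supremum $\sup_{\spnnormsub{G}\le1}\myllangle G,\cdot\myrrangle$ of linear functionals. Hence $J$ is convex with $\partial J(M) = (M-P) + \lambda\,\partial\trnnorm{M}$, and $M$ is a global minimizer if and only if $0\in\partial J(M)$, i.e. if and only if $P = M+\lambda V$ for some $V\in\partial\trnnorm{M}=\mathcal{S}(M)$. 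Thus the whole proof reduces to showing that $V := \tfrac{1}{\lambda}(P-M)$ belongs to $\mathcal{S}(M)$ as described in \eqref{Subdiff}.

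To check this I would diagonalize everything against the fixed basis $U$. Set $d_s := [1-\lambda/\sqrt{\tau_s}]_+$, so $M = U\diag(d)\,U^\top P$. Since $\E(PP^\top)=U\diag(\tau)U^\top$, a one-line computation gives $\E(MM^\top) = U\diag(d_s^2\tau_s)\,U^\top = U\diag\bigl((\sqrt{\tau_s}-\lambda)_+^2\bigr)U^\top$, whence $\sqrt{\E(MM^\top)} = U\diag\bigl((\sqrt{\tau_s}-\lambda)_+\bigr)U^\top$ and, using the scalar identity $(1-\lambda/\sqrt{\tau_s})/(\sqrt{\tau_s}-\lambda)=1/\sqrt{\tau_s}$ valid on $\{\sqrt{\tau_s}>\lambda\}$,
\[
\tilde{M} := \bigl(\sqrt{\E(MM^\top)}\bigr)^\pinv M = U\diag\!\bigl(\mathbf{1}\{\sqrt{\tau_s}>\lambda\}\,\tau_s^{-1/2}\bigr)U^\top P .
\]
Likewise $P-M = U\diag(1-d_s)U^\top P$, where $1-d_s$ equals $\lambda/\sqrt{\tau_s}$ when $\sqrt{\tau_s}>\lambda$ and equals $1$ otherwise, so that
\[
H := V - \tilde{M} = U\diag(h_s)U^\top P, \qquad h_s = \tfrac{1}{\lambda}\,\mathbf{1}\{\sqrt{\tau_s}\le\lambda\} .
\]

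The key observation is that $M$ and $H$ have complementary supports in the eigenbasis: for each $s$, either $\sqrt{\tau_s}>\lambda$ (so $h_s=0$) or $\sqrt{\tau_s}\le\lambda$ (so $d_s=0$). From $d_s h_s = 0$ and $(\sqrt{\tau_s}-\lambda)_+^2 h_s = 0$ for all $s$ one reads off $\E(MH^\top) = U\diag(d_s\tau_s h_s)U^\top = \mathbf{0}_{q\times q}$ and $\E(MM^\top)\,H = U\diag\bigl((\sqrt{\tau_s}-\lambda)_+^2 h_s\bigr)U^\top P = \mathbf{0}$ a.e., which are two of the three conditions in \eqref{Subdiff}. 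For the third, $\E(HH^\top) = U\diag(h_s^2\tau_s)U^\top$ and $h_s^2\tau_s = \tau_s/\lambda^2\le1$ on the support of $h$, so $\spnnorm{H}^2 = \spnorm{\E(HH^\top)}\le1$. Hence $V = \tilde{M}+H\in\mathcal{S}(M)=\partial\trnnorm{M}$; since $P = M+\lambda V$ by construction, \eqref{eq:stationary} holds, and by the convexity argument $M$ minimizes \eqref{eq:risk}.

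I do not expect a genuine obstacle here. The only points requiring care are the bookkeeping of the matrix pseudo-inverse and of the degenerate indices ($\tau_s = 0$, handled by the convention $[1-\infty]_+=0$, and $\sqrt{\tau_s}=\lambda$ exactly), together with the realization that all three membership conditions defining $\mathcal{S}(M)$ collapse to the single complementary-support fact once everything is conjugated by $U$ and examined entrywise along the diagonal.
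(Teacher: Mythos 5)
Your proof is correct and follows essentially the same route as the paper: you construct the same $H = \tfrac{1}{\lambda}U\diag(\mathbf{0}_r,\mathbf{1}_{q-r})U^\top P$ (written with indicators rather than sorted indices) and verify the same three subdifferential conditions via complementary supports in the eigenbasis $U$. The only difference is that you spell out the convexity/Fr\'echet-differentiability argument showing the stationary condition suffices for global optimality, which the paper leaves implicit.
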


\begin{proof}
Assume the singular values are sorted as $\tau_1 \geq \tau_2\geq
\cdots \geq \tau_q$, and let $r$ be the largest index such that
$\sqrt{\tau_r} > \lambda$. Thus, $M$ has rank $r$. Note that
$\sqrt{\E(MM^\top)} = U \diag([\sqrt{\tau}-\lambda]_+) U^\top$, and
therefore 
\begin{equation}
\lambda \bigl(\sqrt{\E(MM^\top)}\bigr)^\pinv M = U 
\diag(\lambda/\sqrt{\tau_{1:r}},\mathbf{0}_{q-r}) U^\top P
\end{equation}
where $x_{1:k} = (x_1,\ldots, x_k)$ and $c_k = (c,\ldots, c)$.
It follows that
\begin{equation}
M + \lambda \bigl(\sqrt{\E(MM^\top)}\bigr)^\pinv M = U \diag(\mathbf{1}_r, \mathbf{0}_{q-r}) U^\top P.
\end{equation}
Now define
\begin{equation}
H = \frac{1}{\lambda} U \diag(\mathbf{0}_r, \mathbf{1}_{q-r}) U^\top P
\end{equation} 
and take $V = \bigl(\sqrt{\E(MM^\top)}\bigr)^\pinv M + H$.  Then we have
$M + \lambda V = P$.  

It remains to show that $H$ satisfies
the conditions of the subdifferential in \eqref{Subdiff}.
Since 
\begin{equation}
\sqrt{\E(HH^\top)} = U\diag(\mathbf{0}_r,
\sqrt{\tau_{r+1}}/\lambda,\ldots, \sqrt{\tau_{q}}/\lambda) U^\top
\end{equation}
we have
$\spnnorm{H}\leq 1$.  Also, $\E(MH^\top) = \mathbf{0}_{q\times q}$ since
\begin{equation}
\diag(1-\lambda/\sqrt{\tau_{1:r}}, \mathbf{0}_{q-r}) \diag(\mathbf{0}_r,
\mathbf{1}_{q-r}/\lambda) = \mathbf{0}_{q\times q}.
\end{equation}
Similarly, $\E(MM^\top) H = \mathbf{0}_{q\times q}$ since
\begin{equation}
\diag((\sqrt{\tau_{1:r}}-\lambda)^2, \mathbf{0}_{q-r}) \diag(\mathbf{0}_r,
\mathbf{1}_{q-r}/\lambda) = \mathbf{0}_{q\times q}.
\end{equation}
It follows that $V\in \partial\spnnorm{M}$.
\end{proof}

\begin{figure}[t]
{\sc \cram{} Backfitting Algorithm --- First Penalty\hfill}
\vskip5pt
\begin{center}
\hrule
\vskip7pt
\normalsize
\begin{enumerate}
\item[] \textit{Input}:  Data matrices
${X}\in\mathbb{R}^{n\times p}$ and
${Y}\in\mathbb{R}^{n\times q}$, regularization parameter $\lambda\ge 0$.
\item[] \textit{Initialize} $\widehat{\mathbb{M}}_j=\big(\widehat{m}^k_j(X_{ij})\big)=0\in\mathbb{R}^{n\times q}$, for $j=1,\ldots, p$.
\item[] \textit{Iterate} until convergence:
\begin{enumerate}
   \item[] \textit{For each $j=1,\ldots, p$}:
   \begin{enumerate}
        \item[(1)] Compute the residual $Z_j \leftarrow  Y -
          \sum_{j'\neq j} \widehat{\mathbb{M}}_{j'}$.
        \item[(2)] Estimate $\Proj_j = \E[\Res_j\given
          X_j]$ by smoothing: $\ \hat \Proj_j = \S_j \Res_j$. \\[5pt]
        \item[(3)] Compute SVD: $\ \frac{1}{n}\hat P_j^\top \hat P_j = U \diag(\tau) U^\top$.
        \item[(4)] Soft threshold: $\ \widehat{\mathbb{M}}_j \leftarrow  \widehat{P}_j\, U \diag(\left[1 - {\lambda}/{\sqrt{\tau}}\right]_+) U^{\top}$.
        \item[(5)] Center: $\ \widehat{\mathbb{M}}_j \leftarrow \widehat{\mathbb{M}}_j - \mathrm{mean}(\widehat{\mathbb{M}}_j)$.
   \end{enumerate}
\end{enumerate}
\item[] \textit{Output}: Component functions $\widehat{\mathbb{M}}_j$
  and estimates of the conditional mean vector $\sum_j \widehat{\mathbb{M}}_{ij}$.
\end{enumerate}
\vskip3pt
\hrule
\end{center}
\vskip0pt
\caption{The \cram{} backfitting algorithm, using the first penalty,
  which penalizes each component.\label{fig:algo}
}
\end{figure}

The analysis above justifies a backfitting algorithm for estimating
a constrained rank additive model with the first penalty, where the
objective is
\begin{equation}
\min_{M_j} \Bigl\{ \frac{1}{2} \E\Bigl\|Y - \sum_{j=1}^p
M_j(X_j)\Bigr\|_2^2 + \lambda \sum_{j=1}^p \trnnorm{M_j}\Bigr\}.
\end{equation}
For a given coordinate $j$, we form the residual $Z_j = Y- \sum_{k\neq j} M_k$, 
and then compute the projection $P_j = \E(Z_j\given X_j)$, with 
singular value decomposition $\E(P_j P_j^\top) = U\diag(\tau)
U^\top$.  We then update
\begin{equation}
M_j = U \diag([1-\lambda/\sqrt{\tau}]_+) U^\top P_j
\end{equation}
and proceed to the next variable.  This is a Gauss-Seidel procedure
that parallels the population backfitting algorithm for \spam{}
\citep{Ravikumar:09}.  

In the sample version we replace the
conditional expectation $\Proj_j = \E(Z_j\given X_j)$ by
a nonparametric linear smoother, $\hat\Proj_j = \S_j Z_j$.  The
algorithm is given in Figure~\ref{fig:algo}.  The algorithm for
penalty 2 is similar and given in Figure~\ref{fig:algo2}.  Both
algorithms can be viewed as functional projected gradient descent
procedures.  Note that to predict
at a point $x$ not included in the training set, the
smoother matrices are constructed using that point; that is,
$\hat P_j(x_j) = S_j(x_j)^\top Z_j$.

\begin{figure}[t]
{\sc \cram{} Backfitting Algorithm --- Second Penalty\hfill}
\vskip5pt
\begin{center}
\hrule
\vskip7pt
\normalsize
\begin{enumerate}
\item[] \textit{Input}:  Data matrices
${X}\in\mathbb{R}^{n\times p}$ and
${Y}\in\mathbb{R}^{n\times q}$, regularization parameter $\lambda\ge 0$.
\item[] \textit{Initialize} $\widehat{\mathbb{M}}_j=\big(\widehat{m}^k_j(X_{ij})\big)=0\in\mathbb{R}^{n\times q}$, for $j=1,\ldots, p$.
\item[] \textit{Iterate} until convergence:
\begin{enumerate}
   \item[] \textit{For each $j=1,\ldots, p$}:
   \begin{enumerate}
        \item[(1)] Compute the residual $Z_j \leftarrow  Y -
          \sum_{j'\neq j} \widehat{\mathbb{M}}_{j'}$.
        \item[(2)] Estimate $\Proj_j = \E[\Res_j\given
          X_j]$ by smoothing: $\ \hat \Proj_j = \S_j \Res_j$. \\[5pt]
        \item[(3)] Compute SVD: $\ \frac{1}{n}\hat P_{1:p}^\top \hat P_{1:p} = U \diag(\tau) U^\top$.
        \item[(4)] Soft threshold:  $\ \widehat{\mathbb{M}}_{1:p}\leftarrow  \hat
        \Proj_{1:p}\, U \diag(\left[1 - {\lambda}/{\sqrt{\tau}}\right]_+)U^\top $.
        \item[(5)] Center: $\ \widehat{\mathbb{M}}_j \leftarrow \widehat{\mathbb{M}}_j - \mathrm{mean}(\widehat{\mathbb{M}}_j)$.
   \end{enumerate}
\end{enumerate}
\item[] \textit{Output}: Component functions $\widehat{\mathbb{M}}_j$ and estimates of the conditional mean vector $\sum_j \widehat{\mathbb{M}}_{ij}$.
\end{enumerate}
\vskip3pt
\hrule
\end{center}
\vskip0pt
\caption{The \cram{} backfitting algorithm, using the second penalty,
  which penalizes the components together.\label{fig:algo2}}
\end{figure}

\section{Working over an RKHS}

Suppose that the functions $m_{j}^k$ are required to lie in a Hilbert
space $\H_j$.  A modified empirical optimization is (for the first penalty)
\begin{align}
\Bigl\|Y - \sum_{j=1}^p \mathbb{M}_j \Bigr\|_F^2 + \lambda_n \sum_{j=1}^p \|\mathbb{M}_j\|_* +
\rho_n \sum_{j=1}^p \sum_{k=1}^q \|m^k_{j}\|_{\H_j}
\label{opt}
\end{align}
where $Y$ is an $n\times q$ data matrix and $\mathbb{M}_j$ is an $n
\times q$ matrix of function values associated with the $j$th columns
of a $n\times p$ data matrix $X$.  The first penalty is then a
nuclear-norm constraint on these observed function values. The second
penalty is a smoothness penalty on each of the coordinate functions in
the appropriate Hilbert space, and is not empirical.

If $\H_j$ is an RKHS with kernel $\K_j$, then the representer theorem 
implies that we can restrict to functions $m_{j}^k$ of the form
\begin{align}
m^k_{j}(\,\cdot\,) = \sum_{i=1}^n \alpha^k_{ij} K_j(x_{ij}, \,\cdot\,),
\end{align}
where the $\alpha^k_{ij}$ are real weights.
In this case the optimization becomes a finite dimensional semidefinite
program over $\alpha$.
This parallels the approach of \cite{Raskutti:10} for sparse additive
models; see also \cite{Dinuzzo:2011}.

If $K_j = \left[K_{h_j}(x_{ij}, x_{i'j})\right] \in\reals^{n\times n}$ denotes
the Gram matrix for the $j$th variable, then $F_j = K_j\alpha_j$
where $\alpha_j  = \left[\alpha_{ij}^k\right] \in\reals^{n\times q}$.
Using the first penalty, the convex optimization is then
\begin{equation}
\min_\alpha   \frac{1}{2n} \Bigl\|Y - \sum_j K_j\alpha_j\Bigr\|_F^2 + \lambda_n \sum_{j=1}^p
\|K_j \alpha_j\|_* + \rho_n \sum_{k=1}^q \sum_{j=1}^p
\sqrt{\alpha_j^{kT} K_j \alpha_j^k}
\end{equation}
where the third term is a smoothness penalty for the RKHS.  This
is a cone program with constraints involving both the second-order
cone and the semidefinite cone.   

\section{Excess Risk Bounds}
\label{sec:excess}

The population risk of a $q\times p$ regression matrix $M(X) =
[M_1(X_1) \cdots M_p(X_p)]$ is
\begin{equation*}
R(M) = \E \| Y - M(X) \ones_p \|_2^2,
\end{equation*}
with sample version denoted $\hat R(M)$.
Consider all models that can be written as
\begin{equation*}
M(X) = U \cdot D \cdot V(X)^\top
\end{equation*}
where $U$ is an orthogonal $q\times r$ matrix, $D$
is a positive diagonal matrix, and $V(X) = [v_{js}(X_j)]$ satisfies
$\E(V^\top V) = I_r$.
The population risk can be reexpressed as
\begin{align*}
R(M) 
     & = \tr \left\{\begin{pmatrix}-I_q\\DU^{\top}\end{pmatrix}^\top \E\left[ \left(\begin{array}{c}Y\\ V(X)^\top\end{array}\right)
     \left(\begin{array}{c}Y\\ V(X)^\top\end{array}\right)^\top
     \right] \begin{pmatrix}-I_q\\ DU^{\top}\end{pmatrix} \right\} \\
     & = \tr \left\{\begin{pmatrix}-I_q\\DU^{\top}\end{pmatrix}^\top 
     \begin{pmatrix} \Sigma_{YY} & \Sigma_{YV} \\ \Sigma_{YV}^\top &
       \Sigma_{VV} \end{pmatrix}
     \begin{pmatrix}-I_q\\DU^{\top}\end{pmatrix} \right\}
\end{align*}
and similarly for the sample risk, with $\hat\Sigma_n(V)$ replacing
$\Sigma(V) := \Cov((Y,V(X)^{\top}))$ above.  The ``uncontrollable''
contribution to the risk, which does not depend on $M$, is
$R_u = \tr \{\Sigma_{YY}\}$.
We can express the remaining ``controllable'' risk as
\begin{align}\notag
R_c(M) =R(M) - R_u
     & = \tr \left\{\begin{pmatrix}-2I_q\\DU^{\top}\end{pmatrix}^\top 
     \Sigma(V)
     \begin{pmatrix}\mathbf{0}_q\\DU^{\top}\end{pmatrix} \right\}.
\end{align}
Using the von Neumann trace inequality, $\tr(AB) \leq \|A\|_p \|B\|_{p'}$
where $1/p + 1/p' = 1$,
\begin{align}
R_c(M) - \hat R_c(M) 
\notag&\leq \spnorm{\begin{pmatrix}-2I_q\\DU^{\top}\end{pmatrix}^\top (\Sigma(V) - \hat\Sigma_n(V))}
     \trnorm{ \begin{pmatrix}\mathbf{0}_q\\DU^{\top}\end{pmatrix}} \\
\notag&\leq \spnorm{\begin{pmatrix}-2I_q\\DU^{\top}\end{pmatrix}^\top}\spnorm{\Sigma(V) - \hat\Sigma_n(V)}
     \trnorm{D} \\
\notag
& \leq C \max(2, \spnorm{D}) \, \spnorm{\Sigma(V) - \hat \Sigma_n(V)} \, \trnorm{D} \\
\label{eq:b}
& \leq C \max\{2, \trnorm{D}^2\} \, \spnorm{\Sigma(V) - \hat \Sigma_n(V)} 
\end{align}
where here and in the following $C$ is a generic constant.  For the
last factor in (\ref{eq:b}), it holds that
\begin{align}
\notag\sup_V \spnorm{\Sigma(V)-\hat\Sigma_n(V)} 
\leq C \sup_V\sup_{w\in\cN} w^\top \left(\Sigma(V)  - \hat\Sigma_n(V)\right) w 
\end{align}
where $\cN$ is a $1/2$-covering of the unit $(q+r)$-sphere, which has
size $|\cN| \leq 6^{q+r} \leq 36^q$; compare \citet[p.~665]{Vershynin:cov}.  We now
assume that the functions $v_{sj}(x_j)$ are uniformly bounded from a
Sobolev space of order two.  Specifically, let $\{\psi_{jk}:
k=0,1,\ldots \}$ denote a uniformly bounded, orthonormal basis with
respect to $L^2[0,1]$, and assume that $v_{sj}\in \cT_j$ where
\begin{equation*}
\cT_j = \Bigl\{ f_j:\ 
f_j(x_j) =\sum_{k=0}^\infty a_{jk}\psi_{jk}(x_j),\ \ \ 
\sum_{k=0}^\infty a_{jk}^2 k^{4}\leq K^2 \Bigr\}
\end{equation*}
for some $0 < K < \infty$.  
The $L_\infty$-covering number of $\cT_j$ satisfies $\log
\cN(\cT_j, \epsilon) \leq K/\sqrt{\epsilon}$.

Suppose that $Y - \E(Y\given X) = W$ is Gaussian and the true
regression function $\E(Y\given X)$ is bounded.
Then the family of random variables $Z_{(V,w)} := \sqrt{n}\cdot  w^\top(\Sigma(V) -
\hat\Sigma_n(V)) w$ is sub-Gaussian and sample continuous.  It
follows from a result of \cite{cesa:99} that 
\begin{align*}
\E\left(\sup_V \sup_{w\in \cN} w^\top (\Sigma(V) - \hat\Sigma_n(V)) w\right) &\leq
\frac{C}{\sqrt{n}} \int_0^B \sqrt{q\log(36) + \log(pq) +
  \frac{K}{\sqrt{\epsilon}}} \ d\epsilon
\end{align*}
for some constant $B$.  Thus, by Markov's inequality we
conclude that 
\begin{equation}\label{eq:OP}
\sup_V \spnorm{\Sigma(V) - \hat\Sigma_n(V)} = O_P\left(\sqrt{\frac{q + \log(pq)}{n}}\right),
\end{equation}
when $n$ tends to infinity and $q$ and $p$ possibly change with $n$.  If
\[
\trnnorm{M} = \trnorm{D} = o\left(\frac{n}{(q +\log(pq))}\right)^{{1/4}},
\]
then returning to \eqref{eq:b}, this gives us a bound on $R_c(M) -
\hat R_c(M) $ that is $o_P(1)$. More precisely, for $\beta_n>0$, define
the class of matrices of functions
\begin{equation}
{\mathcal M}(\beta_n) = \left\{M \,:\,  M(X) = UDV(X)^\top,\; \text{with}\ \E(V^\top V) = I,\; v_{sj}\in \cT_j,\; \trnorm{D} \le\beta_n \right\}.
\end{equation}
Then, for a fitted matrix $\hat M$ chosen from ${\mathcal M}(\beta_n)$, writing $M_*=\arg\min_{M\in\mathcal{M}(\beta_n)}R(M)$, we have
\begin{align*}
R(\hat M) -  \inf_{M\in\mathcal{M}(\beta_n)} R(M) &= R(\hat M) - \hat R(\hat M) - (R(M_*) - \hat R(M_*)) + (\hat R(\hat M) -
\hat R(M_*)) \\
&\leq \big[ R(\hat M) - \hat R(\hat M)\big] - \big[R(M_*) - \hat R(M_*)\big].
\intertext{Subtracting $R_u-\hat{R_u}$ from each of the bracketed
  differences, we obtain that}
R(\hat M) -  \inf_{M\in\mathcal{M}(\beta_n)} R(M) &\le  \big[R_c(\hat M) - \hat R_c(\hat M)\big] - \big[R_c(M_*) - \hat R_c(M_*)\big]\\
& \leq 2 \sup_{M\in\mathcal{M}(\beta_n)} \left\{R_c(M) - \hat R_c(M)\right\} \\
&\stackrel{\text{by \eqref{eq:b}}}{\leq} O_P\left(\trnorm{D}^2 \,
  \spnorm{\Sigma(V) - \hat \Sigma_n(V)}\right).
\end{align*}
Now if 
\begin{equation}
  \label{eq:beta-n}
\beta_n= o\left(\frac{n}{q +\log(pq)}\right)^{{1/4}},
\end{equation}
then we may conclude from \eqref{eq:OP} that
\begin{align*}
R(\hat M) -  \inf_{M\in\mathcal{M}(\beta_n)} R(M) &=
o_P(1). 
\end{align*}
This proves the following result.
\begin{proposition}
  Let $\hat M$ minimize the empirical risk $\frac{1}{n} \sum_i \|Y_i -
  \sum_j M_j(X_{ij})\|^2_2$ over the class $\mathcal{M}(\beta_n)$.
  Suppose that $Y - \E(Y\given X)$ is Gaussian, the true regression
  function $\E(Y\given X)$ is bounded, and $\beta_n$
  satisfies~(\ref{eq:beta-n}) as $n\to\infty$.  Then it holds that
  \[R(\hat M) -  \inf_{M\in \mathcal{M}(\beta_n)} R(M)
  \stackrel{P}{\longrightarrow} 0\;.\]
\end{proposition}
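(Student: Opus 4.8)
The plan is to run the oracle-inequality argument already sketched in the paragraphs preceding the statement, in three steps. \emph{Step 1 (deterministic reduction).} I would set $M_*=\argmin_{M\in\mathcal{M}(\beta_n)}R(M)$ and use that $\hat M$ minimizes the empirical risk over $\mathcal{M}(\beta_n)$, so that $\hat R(\hat M)\le\hat R(M_*)$. This gives
\[
R(\hat M)-R(M_*)\;\le\;\bigl[R(\hat M)-\hat R(\hat M)\bigr]-\bigl[R(M_*)-\hat R(M_*)\bigr]\;\le\;2\sup_{M\in\mathcal{M}(\beta_n)}\bigl|R(M)-\hat R(M)\bigr|.
\]
Subtracting the common random term $R_u-\hat R_u=\tr\{\Sigma_{YY}-\hat\Sigma_{YY}\}$ from each bracketed difference changes nothing on the left and replaces $R,\hat R$ by their controllable parts $R_c,\hat R_c$, so it remains to bound $\sup_{M\in\mathcal{M}(\beta_n)}|R_c(M)-\hat R_c(M)|$.

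\emph{Step 2 (reduction to a spectral deviation).} Next I would substitute the trace representation of $R_c(M)$ in terms of $\Sigma(V)$, apply the von Neumann inequality $\tr(AB)\le\spnorm{A}\trnorm{B}$, and then use $\spnorm{D}\le\trnorm{D}\le\beta_n$ to obtain the chain of bounds ending in \eqref{eq:b}, namely
\[
\sup_{M\in\mathcal{M}(\beta_n)}\bigl|R_c(M)-\hat R_c(M)\bigr|\;\le\;C\max\{2,\beta_n^2\}\,\sup_{V}\spnorm{\Sigma(V)-\hat\Sigma_n(V)}.
\]
So the whole problem reduces to showing $\beta_n^2\sup_V\spnorm{\Sigma(V)-\hat\Sigma_n(V)}=o_P(1)$.

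\emph{Step 3 (uniform spectral concentration).} Here I would first discretize the operator norm: for a fixed $1/2$-net $\cN$ of the unit $(q+r)$-sphere, $|\cN|\le 6^{q+r}\le 36^q$, the same net works for every $V$, and $\spnorm{\Sigma(V)-\hat\Sigma_n(V)}\le C\sup_{w\in\cN}w^\top(\Sigma(V)-\hat\Sigma_n(V))w$. Then I would view $Z_{(V,w)}:=\sqrt n\,w^\top(\Sigma(V)-\hat\Sigma_n(V))w$ as a stochastic process indexed by $(V,w)$, with each coordinate function $v_{sj}$ ranging over the Sobolev ball $\cT_j$ and $w$ over $\cN$. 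Under the Gaussian-noise assumption on $Y-\E(Y\given X)$, boundedness of $\E(Y\given X)$, and uniform boundedness of the basis $\{\psi_{jk}\}$, each $Z_{(V,w)}$ is centered sub-Gaussian and the process is sample continuous, so I would invoke the chaining bound of \cite{cesa:99}. The metric entropy of the index set splits into $\log|\cN|\le q\log 36$ for the net, a $\log(pq)$ term for which $(s,j)$ entry is being perturbed, and $\log\cN(\cT_j,\epsilon)\le K/\sqrt\epsilon$ for the Sobolev ball in $L_\infty$; since $\int_0^B\epsilon^{-1/4}\,d\epsilon<\infty$ the Dudley integral converges and
\[
\E\Bigl(\sup_V\sup_{w\in\cN}w^\top(\Sigma(V)-\hat\Sigma_n(V))w\Bigr)\;\le\;\frac{C}{\sqrt n}\int_0^B\sqrt{q\log 36+\log(pq)+K/\sqrt\epsilon}\;d\epsilon\;=\;O\Bigl(\sqrt{\tfrac{q+\log(pq)}{n}}\Bigr).
\]
Markov's inequality then yields \eqref{eq:OP}, and combining with Step 2 and the rate condition \eqref{eq:beta-n} gives $R(\hat M)-\inf_{M\in\mathcal{M}(\beta_n)}R(M)=O_P\bigl(\beta_n^2\sqrt{(q+\log(pq))/n}\bigr)=o_P(1)$, which is the claim.

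Steps 1 and 2 are essentially algebra, so I expect Step 3 to be the main obstacle: choosing a metric on the pairs $(V,w)$ under which the increments $Z_{(V,w)}-Z_{(V',w')}$ are genuinely sub-Gaussian with a variance proxy that does not itself grow with $q$ or $p$, verifying sample continuity so that \cite{cesa:99} applies, and confirming that the entropy feeding the integral is exactly the combination of the finite-dimensional $36^q$ net (for the directions) with the $K/\sqrt\epsilon$ entropy of the Sobolev balls (for the functions), and not something larger. A secondary point to check is that the $1/2$-net reduction of the operator norm is applied uniformly over $V$, which is legitimate precisely because a single net serves all $V$.
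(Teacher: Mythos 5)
Your proposal reproduces the paper's own argument: the same oracle decomposition using $\hat R(\hat M)\le\hat R(M_*)$, the same reduction via the von Neumann inequality to the bound \eqref{eq:b} with $\trnorm{D}\le\beta_n$, and the same net-plus-chaining (Cesa-Bianchi--Lugosi) control of $\sup_V\spnorm{\Sigma(V)-\hat\Sigma_n(V)}$ followed by Markov's inequality and the rate condition \eqref{eq:beta-n}. The points you flag as potential obstacles in Step 3 (the sub-Gaussian increments, sample continuity, and the entropy calculation) are exactly the ingredients the paper invokes, so your plan matches the paper's proof.
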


\section{Examples}
\label{sec:genes}

\subsection{Synthetic Data Example}

\begin{figure}[t]
\begin{center}
\begin{tabular}{cc}
Penalty 1 & Penalty 2 \\
\hskip-15pt
\includegraphics[width=.45\textwidth]{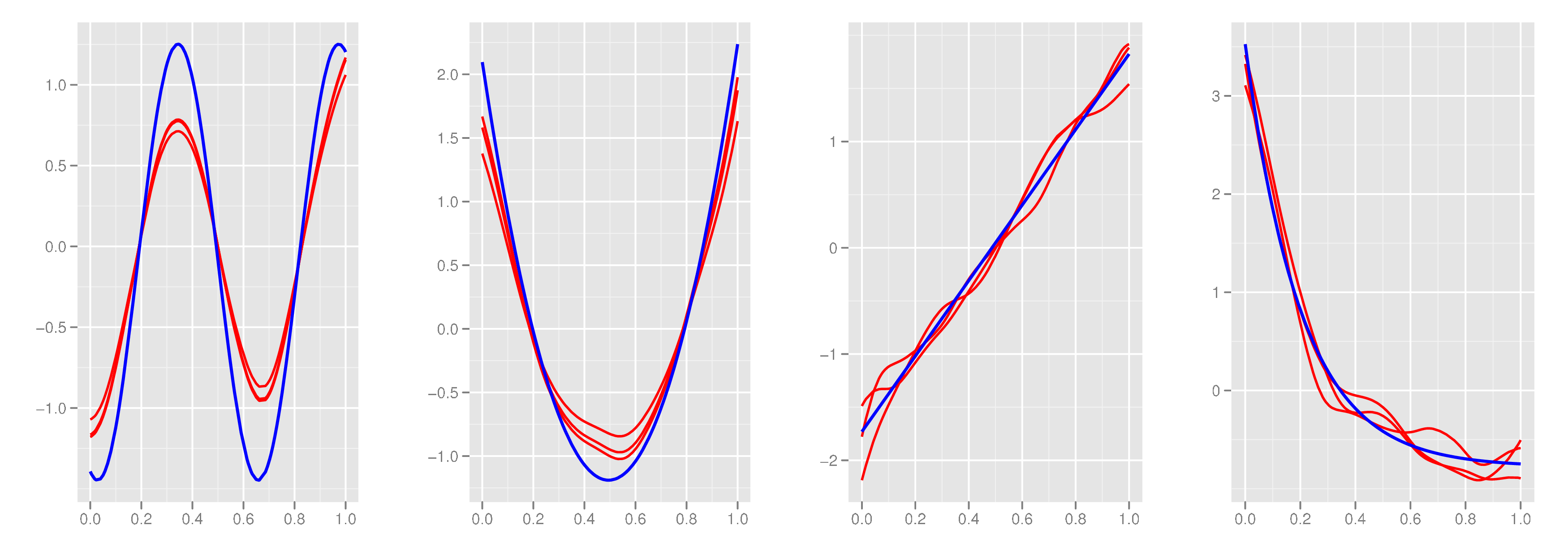} &
\qquad
\includegraphics[width=.45\textwidth]{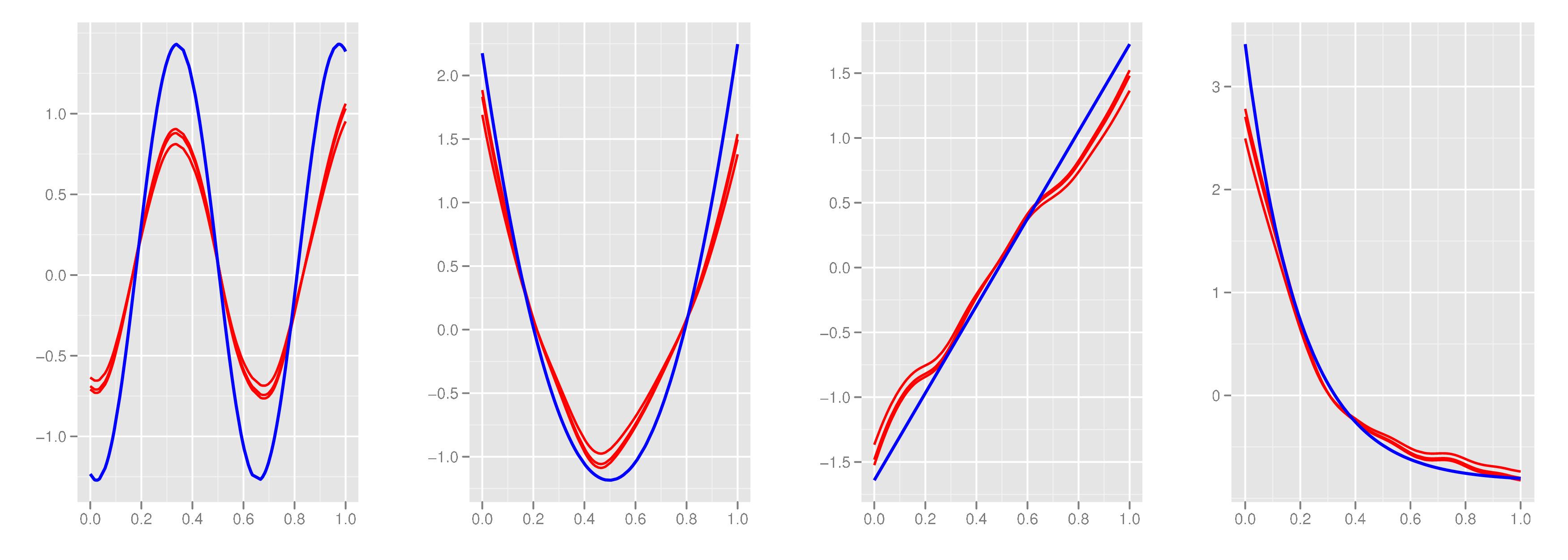}
\end{tabular}
\caption{Example of the constrained-rank backfitting algorithm, with $q=3$
  and $p=4$.  The model is the same for each $k$; that is,
  $m_{j}^{k} = m_{j}^{k'}$ for each $j, k, k'$. The left
  plots show the fits with penalties $\lambda_j \trnnorm{M_j}$, with 
  $\lambda = (3, 3, 0, 0)$. The right plots use 
  penalty 2, and the solution has rank 1. \label{synthfig}}
\end{center}
\end{figure}

Figure~\ref{synthfig} 
shows an example of the backfitting algorithms for penalties 1 and 2.
For this example $q=3$ and $p=4$.  
The model is the same for each $k$; that is,
  $m_{j}^{k} = m_{j}^{k'}$ for each $j, k, k'$.  
The true regression functions are $m_1^k(x) = \sin(2x)$, $m_2^k(x) =
x^2-c_2$, $m_3^k(x) = x$, and $m_4^k(x) = e^{-x}-c_4$, where $c_2$ and
$c_4$ are centering constants.  The left
  group of plots in Figure~\ref{synthfig} shows the fits obtained with penalties $\lambda_j \trnnorm{M_j}$ with
  regularization parameters $\lambda = (\lambda_1, \lambda_2,
  \lambda_3, \lambda_4) = (3, 3, 0, 0)$, so that the
  third and fourth functions are not regularized.   The plots show how
  the first and second function estimates are identical up to
  multiplicative scaling for each   $k=1,2,3$ ($\mathbb{M}_1$ and $\mathbb{M}_2$ have rank one), while the third and fourth function estimates
  vary ($\mathbb{M}_3$ and $\mathbb{M}_4$ have rank three).  The fits are made with
  local linear smoothing.  The right set of plots corresponds to 
  penalty 2, and the solution has rank 1.  We intentionally use a bandwidth that is too small, to better show the
  differences with and without regularization. 
The true regression 
functions $m_j^k$ are shown in blue in the plots; the fitted functions
are in red, with the fits for a given $j$ superimposed on the same
plot. The sample size is $n=150$, and the noise variance is $\sigma^2
= 1$.

\subsection{Gene Expression Data}

To further illustrate the proposed nonparametric reduced rank regression
techniques, we consider data on gene expression in \textit{E.~coli}
from the ``DREAM 5 Network Inference Challenge''\footnote{\tt
  http://wiki.c2b2.columbia.edu/dream/index.php/D5c4} \citep{wisdom}.
In this challenge genes were classified as transcription factors (TFs)
or target genes (TGs).  Transcription factors regulate the target
genes, as well as other TFs.

\begin{figure}[!t]
\begin{center}
\begin{tabular}{cc}
\small Penalty 1, $\lambda=20$ & 
\small Penalty 2, $\lambda=5$ \\[-0pt]
\hskip-3pt
\includegraphics[scale=.30,angle=-90]{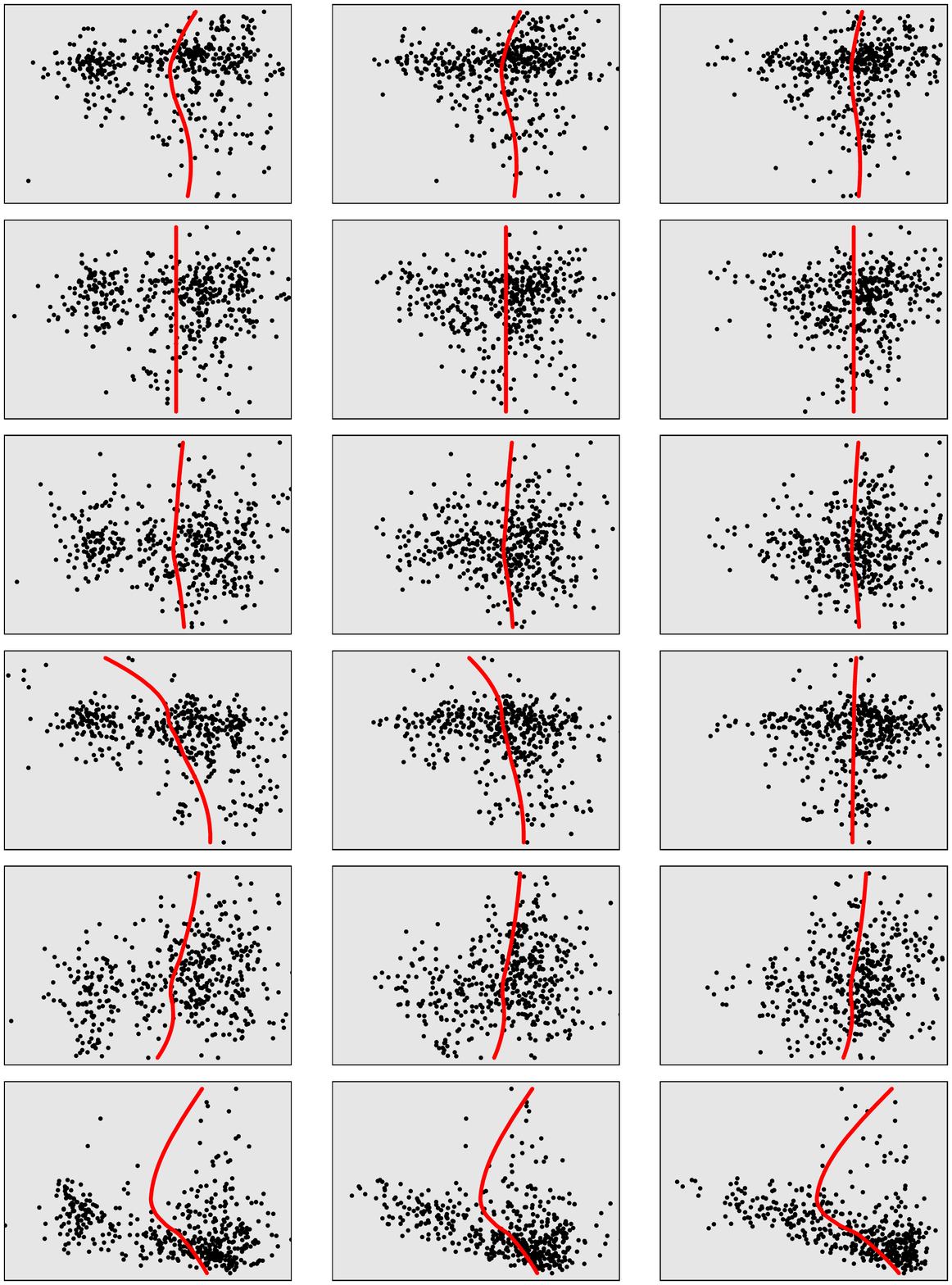}&
\includegraphics[scale=.30,angle=-90]{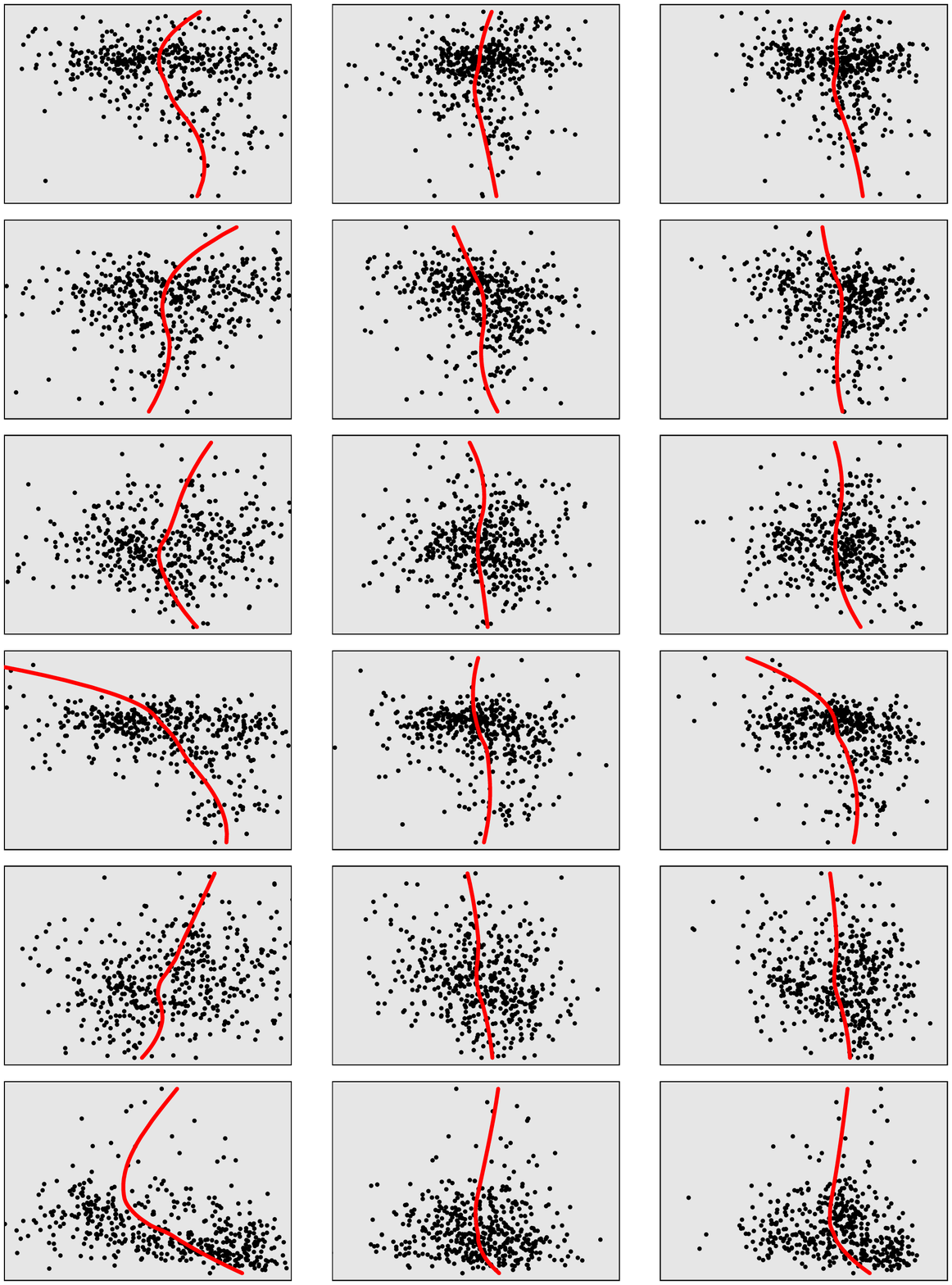}
\end{tabular}
\caption{\label{f01}
  Fits on the gene expression data.  Left: Penalty 1 with
  large tuning parameter. Right: Penalty 2 tuned via 10-fold
  cross-validation. Plotted points are residuals holding out the given
  predictor.}
\end{center}
\vskip-10pt
\end{figure}

We focus on predicting the expression levels $Y$ for a particular set
of $q=27$ TGs, using the expression levels $X$ for $p=6$ TFs.  Our
motivation for analyzing these 33 genes is that, according to the gold
standard gene regulatory network used for the DREAM 5 challenge, the 6
TFs form the parent set common to two additional TFs, which have the
27 TGs as their child nodes.  In fact, the two intermediate nodes
d-separate the 6 TFs and the 27 TGs in a Bayesian network
interpretation of this gold standard.  This means that if we treat the
gold standard as a causal network, then up to noise, the functional
relationship between $X$ and $Y$ is given by the composition of a map
$g:\mathbb{R}^6\to\mathbb{R}^2$ and a map $h:\mathbb{R}^2\to
\mathbb{R}^{27}$.  If $g$ and $h$ are both linear, their composition
$h\circ g$ is a linear map of rank at most than 2.  As observed in
Section~\ref{sec:penalties}, such a reduced rank linear model is a
special case of an additive model with reduced rank in the sense of
penalty 2.  More generally, if $g$ is an additive function and $h$ is
linear, then $h\circ g$ has rank at most 2 in the sense of penalty 2.
Higher rank can in principle occur under functional composition, since
even a univariate additive map $h:\mathbb{R}\to\mathbb{R}^q$ may have
rank up to $q$ under our penalties (penalty 1 and 2
coincide for univariate maps).

The backfitting algorithm of Figure~\ref{fig:algo} with penalty 1 and
a rather aggressive choice of the tuning parameter $\lambda$ produces
the estimates shown in Figure \ref{f01}, for which we have selected
three of the 27 TGs.  Under such strong regularization, the 5th column
of functions is rank zero and, thus, identically zero.  The remaining
columns have rank one; the estimated fitted values are scalar
multiples of one another.
We also see that scalings can be different for different columns.  The
third plot in the third row shows a slightly negative slope,
indicating a negative scaling for this particular estimate.  The
remaining functions in this row are oriented similarly to the other
rows, indicating the same, positive scaling.  This property
characterizes the difference between penalties 1 and 2; in an
application of penalty 2, the scalings would have been the same across
all functions in a given row.

Next, we illustrate a higher-rank solution for penalty 2.  Choosing
the regularization parameter $\lambda$ by ten-fold cross-validation
gives a fit of rank 5, considerably lower than 27, the maximum
possible rank.  Figure~\ref{f01} shows a selection of three
coordinates of the fitted functions.  Under rank five, each row of
functions is a linear combination of up to five other, linearly
independent rows.  We remark that the use of cross-validation
generally produces somewhat more complex models than is necessary to
capture an underlying low-rank data-generating mechanism.  Hence, if
the causal relationships for these data were indeed additive and low
rank, then the true low rank might well be smaller than five.

\subsection{Biochemistry Example}

\begin{figure}[t]
\begin{center}
\begin{tabular}{cc}
\quad Penalty 2, $\lambda=1$ & \quad No regularization \\[-25pt]
\hskip-20pt
\includegraphics[width=.6\textwidth,angle=-90]{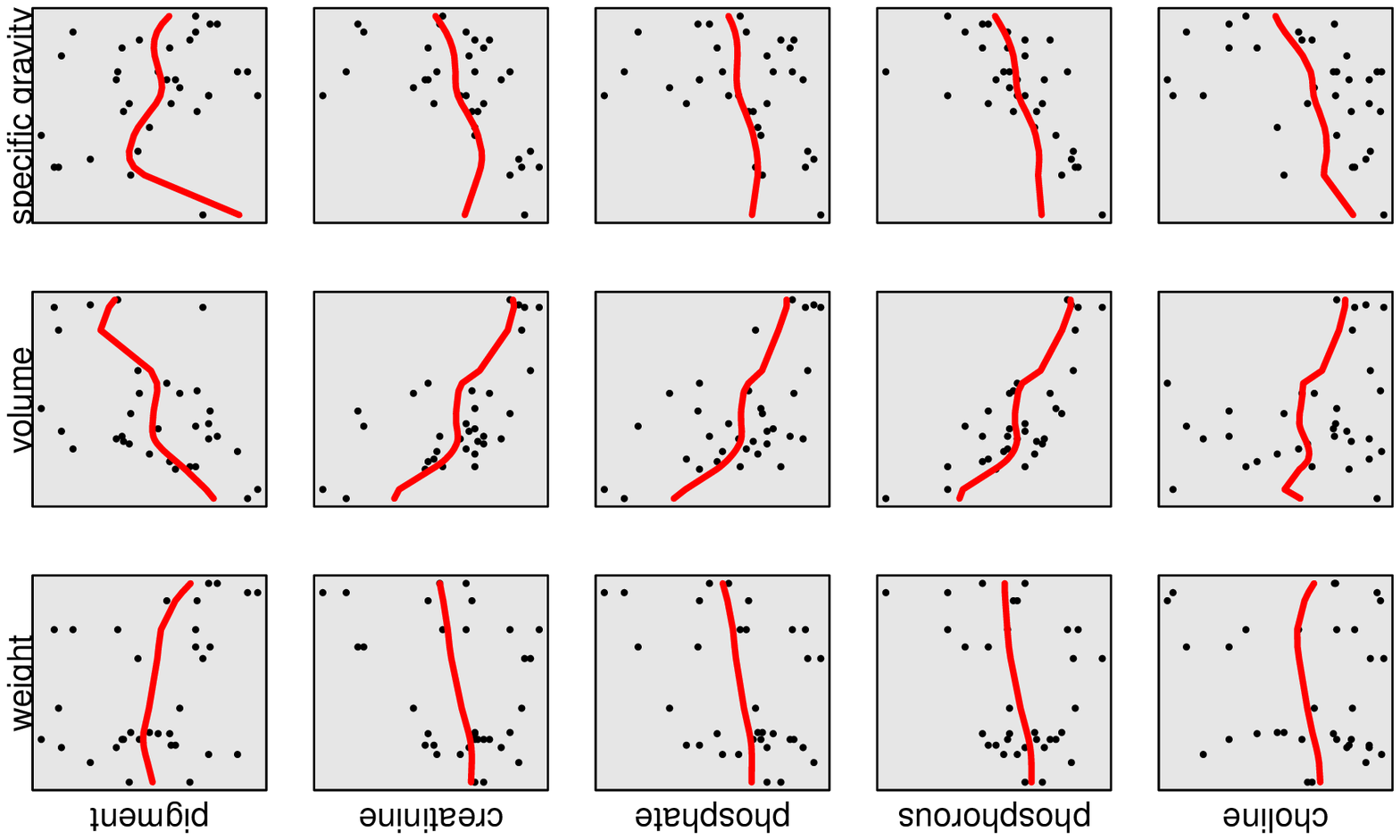}
&
\includegraphics[width=.6\textwidth,angle=-90]{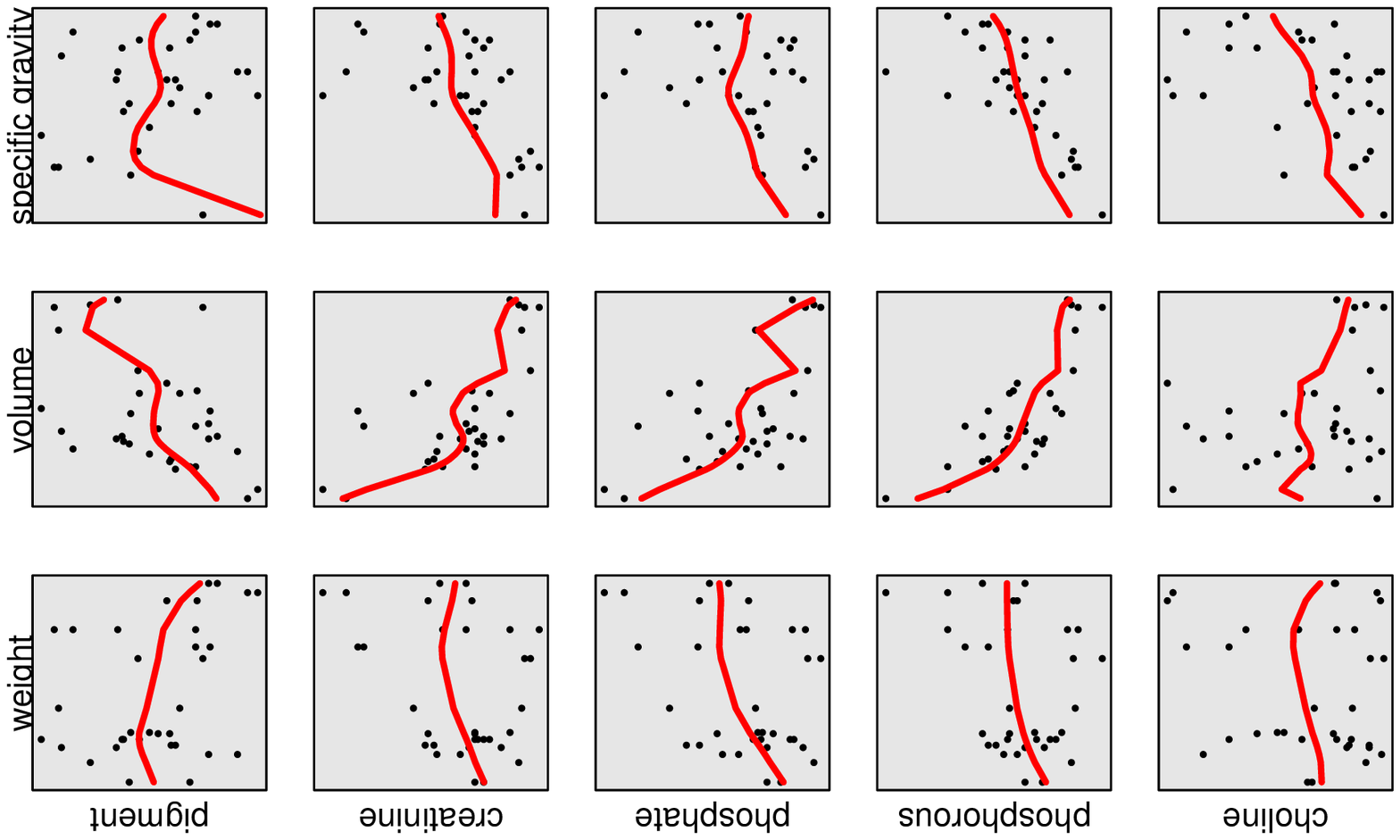}
\\[-10pt]
\end{tabular}
\end{center}
\caption{Fits on the biochemistry data with penalty 2 (left),
  with $\lambda=1$. The solution
  has rank three, with regression functions for 
  creatinine, phosphate and phosphorous identical up to scaling.  The
  fits on the right have no regularization, and are full
  rank.\label{biochem}}
\vskip-5pt
\end{figure}

Here we analyze the same biochemical data of Smith et
al.~(1962) that was used by \cite{Yuan:07}.  The data contain chemical measurements for
33 individual samples of men's urine
specimens.  The $q=5$ response variables are pigment creatinine, and
the concentrations (in mg/ml) of phosphate,
phosphorous, creatinine and choline.  The $p=3$ covariates are
the weight of the subject,  and volume and specific gravity of the
specimen.  \cite{Yuan:07} form a linear model where the coefficients
in a spline basis are regularized.  Their plots suggest some boundary
effects, due to the choice of basis.  Here we use our backfitting
algorithm with local linear smoothing.  No explicit basis is used.
Figure~\ref{biochem} shows the result of
using regularization $\lambda \trnnorm{M_{1:p}}$ with
$\lambda=1$ (penalty 2), and bandwidth $h=.3$ for all variables.
The regularized solution has rank 3, where
the response variables for phosphorous, phosphate, and creatinine
concentrations are scaled versions of each other.
The plots on the right show fits with no regularization ($\lambda=0$).

\section{Summary}

This paper introduced two penalties that induce reduced rank fits in
multivariate additive nonparametric regression.  Under linearity, the
penalties specialize to group lasso and nuclear
norm penalties for classical reduced rank regression.
Examining the subdifferentials of each of these penalties, we
developed backfitting algorithms for the two resulting optimization
problems that are based on soft-thresholding of singular values of
smoothed residual matrices. The algorithms were demonstrated on a gene expression data set
and a biochemical data set that have low-rank structure. We also provided a
persistence analysis that shows error tending to zero under a scaling
assumption on the sample size $n$ and the dimensions $q$ and~$p$.

%

\section*{Acknowledgements}
Research supported in part by NSF grants IIS-1116730, DMS-0746265, and DMS-1203762,
AFOSR grant FA9550-09-1-0373, ONR grant
N000141210762, and an Alfred P.~Sloan Fellowship.

\bibliographystyle{imsart-nameyear}
\bibliography{local}

\end{document}